\numberwithin{equation}{section} 
\newtheorem{theorem}{Theorem}[section]                   
\newtheorem{definition}[theorem]{Definition}
\newcommand{\SREC}{S-REC}
\DeclareMathOperator*{\argmin}{arg\,min}
\def\R{{\mathbb{R}}}
\def\R{{\mathbb{R}}}
\def\S{{\mathcal{S}}}
\newcommand{\s}{0.060}
\title{Solving Linear Inverse Problems Using GAN Priors: \\ An Algorithm with Provable Guarantees}
\name{Viraj Shah and Chinmay Hegde \thanks{This work was supported in part by grants from the National Science Foundation (NSF CCF-1566281) and NVIDIA.}
}
\address{ECpE Department, Iowa State University, Ames, IA, 50010}
\begin{document}
	\maketitle
	\ninept
	\begin{abstract}
In recent works, both sparsity-based methods as well as learning-based methods have proven to be successful in solving several challenging linear inverse problems. However, sparsity priors for natural signals and images suffer from poor discriminative capability, while learning-based methods seldom provide concrete theoretical guarantees. In this work, we advocate the idea of replacing hand-crafted priors, such as sparsity, with a Generative Adversarial Network (GAN) to solve linear inverse problems such as compressive sensing. In particular, we propose a projected gradient descent (PGD) algorithm for effective use of GAN priors for linear inverse problems, and also provide theoretical guarantees on the rate of convergence of this algorithm. Moreover, we show empirically that our algorithm demonstrates superior performance over an existing method of leveraging GANs for compressive sensing.   
\end{abstract}

	\begin{keywords}
	Inverse problems, compressive sensing, generative adversarial networks
	\end{keywords}
	\section{Introduction}

\subsection{Motivation}
Linear inverse problems arise in diverse range of application domains such as computational imaging, optics, astrophysics, and seismic geo-exploration. Formally put, the basic structure of a linear inverse problem can be represented in terms of a linear equation of the form: 
\begin{align}
y = Ax^*+e,~~\label{eq:lip}
\end{align}
where $x^* \in \R^n$ is the target signal (or image), $A \in \R^{m \times n}$ is a linear operator that captures the forward process, $y \in \R^m$ denotes the given observations, and $e \in \R^m$ represents stochastic noise. The aim is to recover (an estimate of) the unknown signal $x^*$ given $y$ and $A$. 

Many important problems in signal and image processing can be modeled as linear inverse problems. For example, the classical problem of \emph{super-resolution} corresponds to the case where the operator $A$ represents a low-pass filter followed by downsampling. The problem of \emph{image inpainting} corresponds to the case where $A$ can be modeled as a pixel-wise selection operator applied to the original image. 
Similar challenges arise in image denoising as well as compressive sensing~\cite{candes2006compressive,candes2006stable,donoho2006compressed}. 

In general, when $m < n$ the inverse problem is ill-posed.
%
A common approach for resolving this issue is to obtain an estimate of $x^*$ as the solution to the constrained optimization problem:
\begin{align}
\widehat{x} &= \argmin~f(y; Ax),~~\label{eq:cop}\\
&\text{s. t.}~~~x \in \S,\nonumber
\end{align}
where $f$ is a suitably defined measure of error (called the \emph{loss} function) and $\S \subseteqq \R^n$ is a set that captures some sort of known structure that $x^*$ is  \emph{a priori} assumed to obey. A very common modeling assumption on $x^*$ is \emph{sparsity}, and $\S$ comprises the set of sparse vectors in some (known) basis representation. For example, smooth signals and images are (approximately) sparse when represented in the Fourier basis. This premise alleviates the ill-posed nature of the inverse problem, and in fact, it is well-known that accurate recovery of $x^*$ is possible if (i) the signal $x^*$ is sufficiently sparse, and (ii) measurement matrix $A$ satisfies certain algebraic conditions, such as the Restricted Isometry Property~\cite{candes2006compressive}. 

However, while being powerful from a computational standpoint, the sparsity prior has somewhat limited discriminatory capability. A sparse signal (or image) populated with random coefficients appears very distinct from the signals (or images) that abound in natural applications, and it is certainly true that nature exhibits far richer nonlinear structure than sparsity alone. This has spurred the development of estimation algorithms that use more refined priors, such as structured sparsity~\cite{modelcs,surveyEATCS}, dictionary models~\cite{elad2006image,aharon2006rm}, or bounded total variation~\cite{chan2006total}. While these priors often provide far better performance than using standard sparsity-based methods, they still suffer from the aforementioned limitations on modeling capability.

We focus on a newly emerging family of priors that are \emph{learned} from massive amounts of training data using a generative adversarial network (GAN)~\cite{goodfellow2014generative}. These priors are constructed by training the parameters of a certain neural network that simulates a nonlinear mapping from some latent parameter space of dimension $k \ll n$ to the high-dimensional ambient space $\R^n$. GANs have found remarkable applications in modeling image distributions~\cite{zhu2016generative,brock2016neural,chen2016infogan,zhao2016energy}, and a well-trained GAN closely captures the notion of a signal (or image) being `natural'~\cite{berthelot2017began}. Indeed, GAN-based neural network learning algorithms have been successfully employed to solve linear inverse problems such as image super-resolution and inpainting~\cite{yeh2016semantic, ledig2016photo}. However, these methods are mostly heuristic, and their theoretical properties are not yet well understood. Our goal in this paper is to take some initial steps towards a principled use of GAN priors for inverse problems.

\subsection{Our Contributions}

In this paper, we propose and analyze the well known \emph{projected gradient descent} (PGD) algorithm for solving \eqref{eq:cop}. We adopt a setting similar to the recent, seminal work of \cite{bora2017compressed}, and assume that the generator network (say, $G$) well approximates the high-dimensional probability distribution of the set $\S$, i.e., we expect that for each vector $x^*$ in $\S$, there exists a vector $\widehat{x} = G(\widehat{z})$ very close to $x^*$ in the support of distribution defined by $G$. 
The authors of~\cite{bora2017compressed} rigorously analyze the statistical properties of the minimizer of \eqref{eq:cop}. However, they do not explicitly discuss an \emph{algorithm} to perform this minimization. Instead, they re-parameterize \eqref{eq:cop} in terms of the latent variable $z$, and assume that gradient descent (or stochastic gradient descent) in the latent space provides an estimate of sufficiently high quality. However, if initialized incorrectly, (stochastic) gradient descent can get stuck in local minima, and therefore in practice their algorithm requires several restarts in order to provide good performance. Moreover, the rate of convergence of this method is not analyzed.

In contrast, we advocate using PGD to solve \eqref{eq:cop} directly in the ambient space. The high level idea in our approach is that through iterative projections, we are able to mitigate the effects of local minima and are able to explore the space outside the range of the generator $(G)$. 
Our procedure is depicted in Fig.~\ref{fig:intro1}. We choose a zero vector as our initial estimate($x_0$), and in each iteration, we update our estimate by following the standard gradient descent update rule (red arrow in Fig.~\ref{fig:intro1}), followed by projection of the output onto the span of generator $(G)$ (blue arrow in Fig.~\ref{fig:intro1}).

We support our PGD algorithm via a rigorous theoretical analysis. We show that the final estimate at the end of $T$ iterations is an approximate reconstruction of the original signal $x^*$, with very small reconstruction error; moreover, under certain sufficiency conditions on the linear operator $A$, PGD demonstrates linear convergence, meaning that $T = \log(1/\varepsilon)$ is sufficient to achieve $\varepsilon$-accuracy.
As further validation of our algorithm, we present a series of numerical results. We train two GANs: our first is a relatively simple two-layer generative model trained on the MNIST dataset~\cite{lecun1998gradient}; our second is a more complicated Deep Convolutional GAN \cite{radford2015unsupervised,taehoon2017} trained on the CelebA \cite{liu2015deep} dataset. In all experiments, we compare the performance of our algorithm with that of \cite{bora2017compressed} and a baseline algorithm using sparsity priors (specifically, the Lasso with a DCT basis). Our algorithm achieves the best performance both in terms of quantitative metrics (such as the structural similarity index) as well as visual quality.




\begin{figure}
	\centering
	\def\svgwidth{\columnwidth}
	\input{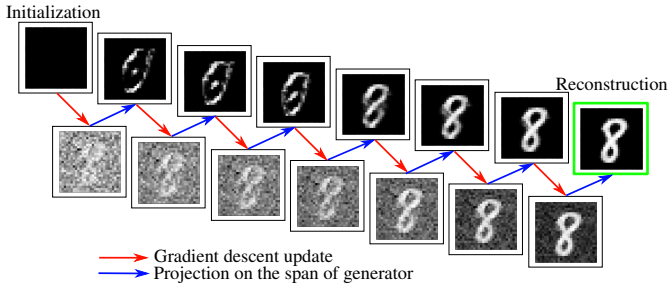}
	\caption{\emph{Illustration of our algorithm. Starting from a zero vector, we perform a gradient descent update step (red arrow) and projection step (blue arrow) alternatively to reach the final estimate.}}
	\label{fig:intro1}
\end{figure}

\subsection{Related Work}

Approaches to solve linear inverse problems can be classified broadly in two categories. The approaches in the first category mainly use hand-crafted signal priors to distinguish `natural' signals from the infinite set of feasible solutions. The prior can be encoded in the form of either a constraint set (as in Eq.\ \ref{eq:cop}) or an extra regularization penalty. Several works (including \cite{donoho1995noising, xu2010image, dong2011image}) employ sparsity priors to solve inverse problems such as denoising, super-resolution and inpainting. In~\cite{elad2006image,aharon2006rm}, sparse and redundant dictionaries are learned for image denoising, whereas in \cite{rudin1992nonlinear,chambolle2004algorithm,chan2006total}, total variation is used as a regularizer. Despite their successful practical and theoretical results, all such hand-designed priors often fail to restrict the solution space only to natural images, and it is easily possible to generate signals satisfying the prior but do not resemble natural data.

The second category consists of learning-based methods involving the training of an end-to-end network mapping from the measurement space to the image space. Given a large dataset $x_i, i \in {1,2,..,N}$ and a measurement matrix $A$, the inverse mapping from $Ax_i$ to $x_i$ can be learned through a deep neural network training  \cite{lecun2015deep}. This approach is used in \cite{kulkarni2016reconnet,mousavi2015deep,mousavi2017learning,xu2014deep, dong2016image,kim2016accurate} to solve different inverse problems, and has met with considerable success. However, the major limitations are that a separate network is required for each new linear inverse problem; moreover, most of these methods lack concrete theoretical guarantees. The recent papers \cite{rick2017one,kelly2017deep} resolve this issue by training a single \emph{quasi-projection} operator to project each candidate solution on the manifold of natural images, and indeed in some sense is complementary to our approach. On the other hand, we train a \emph{generative} model that simulates the space of natural signals (or images) for a given application; moreover, our method can be rigorously analyzed. 

Recently, due to advances in adversarial training techniques \cite{goodfellow2014generative}, GANs have been explored as the powerful tool to solve challenging inverse problems. GANs can approximate the real data distribution closely, with visually striking results \cite{arjovsky2017wasserstein,berthelot2017began}. In \cite{yeh2016semantic, ledig2016photo}, GANs are used to solve the image inpainting and super-resolution problems respectively. The work closest to our work is the approach of leveraging GANs for compressive sensing~\cite{bora2017compressed}, which provides the basis for our work. Our method improves on the results of \cite{bora2017compressed} empirically, along with providing mathematical analysis of convergence.

%
%
%
%
	\section{Algorithm and Main Results}

\subsection{Setup}
\label{sec:setup}
Let $\S \subseteq \R^n$ be the set of `natural' images in data space with a vector $x^* \in \S$. We consider an ill-posed linear inverse problem \eqref{eq:lip} with the linear operator $A$ being a Gaussian random matrix. For simplicity, we do not consider the additive noise term. 
To solve for $\widehat{x}$, we choose Euclidean measurement error as the loss function $f(\cdot)$ in Eqn. \eqref{eq:cop}. Therefore, given $y$ and $A$, we seek
\begin{align}
\widehat{x} = \argmin_{x \in \S}\|y-Ax\|^2.
\label{eq:setup2}
\end{align} 
All norms represented by $\|\cdot\|$ in this paper are Euclidean norms unless stated otherwise.
\subsection{Algorithm}

We train the generator $G : \R^k \rightarrow \R^n$ that maps a standard normal vector $z \in \R^k$ to the high dimensional sample space $G(z) \in \R^n$. We assume that our generator network well approximates the high-dimensional probability distribution of the set $\S$. With this assumption, we limit our search for $\widehat{x}$ only to the range of the generator function ($G(z)$). The function $G$ is assumed to be differentiable, and hence we use back-propagation for calculating the gradients of the loss functions involving $G$ for gradient descent updates.

The optimization problem in Eqn. \ref{eq:setup2} is similar to a least squares estimation problem, and a typical approach to solve such problems is to use gradient descent. However, the candidate solutions obtained after each gradient descent update need not represent a `natural' image and may not belong to set $\S$. We solve this limitation by projecting the candidate solution on the range of the generator function after each gradient descent update. Here, the projection of any vector $u$ on the generator is the image closest to $u$ in the span of the generator. 

Thus, in each iteration of our proposed algorithm \ref{alg:PGD-GAN}, two steps are performed in alternation: a gradient descent update step and a projection step. 
\subsection{Gradient Descent Update}
The first step is simply an application of a gradient descent update rule on the loss function $f(\cdot)$ given as,
\[
f(x) \coloneqq \|y-Ax\|^2.
\] 
Thus, the gradient descent update at $t^{th}$ iteration is,
\[
w_t \leftarrow x_t + \eta A^T(y-Ax_t),
\] 
where $\eta$ is the learning rate.
\subsection{Projection Step}
In projection step, we aim to find an image from the span of the generator which is closest to our current estimate $w_t$. 
We define the projection operator $\mathcal{P}_G$ as follows:
\[
\mathcal{P}_G\left(w_t\right) \coloneqq G\left(\argmin_{z}f_{in}(z)\right),
\]
where the inner loss function is defined as,
\[
f_{in}(z) \coloneqq \|w_t - G(z)\|.
\]
We solve the inner optimization problem by running gradient descent with $T_{in}$ number of updates on $f_{in}(z)$. The learning rate $\eta_{in}$ is chosen empirically for this inner optimization. Though the inner loss function is highly non-convex due to the presence of $G$, we find empirically that the gradient descent (implemented via back-propagation) works very well. 
In each of the $T$ iterations, we run $T_{in}$ updates for calculating the projection. Therefore, $T \times T_{in}$ is the total number of gradient descent updates required in our approach.

\begin{algorithm}[t]
	\caption{\textsc{PGD-GAN}}
	\label{alg:PGD-GAN}
	\begin{algorithmic}[1]
	\State \textbf{Inputs:} $y$, $A$, $G$, $T$, \textbf{Output:}  $\widehat{x}$
	\State $x_0 \leftarrow \textbf{0}$ \hspace{17.8em} 
	\While {$t < T$}
	\State $w_t \leftarrow x_t + \eta A^T(y-Ax_t)$ \hspace{8.8em} 
	\State $x_{t+1} \leftarrow \mathcal{P}_G(w_t) = G\left(\argmin_{z}\|w_t - G(z)\|\right)$ \hspace{0.6em} 
	\State $t \leftarrow t+1$
	\EndWhile
	\State $\widehat{x} \leftarrow x_{T}$
	\end{algorithmic}
\end{algorithm}
	\subsection{Analysis}
\begin{figure*}[!t]
	\begin{center}
		\begingroup
		\setlength{\tabcolsep}{4pt} 
		\renewcommand{\arraystretch}{1} 
		\begin{tabular}{ccc}      
			\raisebox{-0.6\height}{
			\begin{tikzpicture}[scale=0.6]
			\pgfplotsset{scaled y ticks=false}
			
			\begin{axis}[
			xlabel=Number of measurements $(m)$,
			ylabel=Reconstruction error (per pixel), 
			grid=both,
			minor y tick num=1,
			major grid style={dashed},
			minor grid style={dotted},
			ymin=0,
			xtick distance=40,
			yticklabel style={
				/pgf/number format/fixed,
				/pgf/number format/precision=5
			}] 
			\addplot[black!60!green, mark=diamond*, mark size=2.5] 
			coordinates {
				(20,0.1116925674)
				(40,0.1104524388)
				(60,0.1077322946)
				(80,0.1043693087)
				(100,0.1009670696)
				(120,0.09845475989)
				(140,0.09478937129)
				(160,0.08939492804)
				(180,0.08597683983)
				(200,0.07938231549)}; 
			\addplot[blue, mark=*, mark size=2] 
			coordinates{
				
				(20,0.0690527727406)
				(40,  0.0392738807791)
				(60, 0.0335496354056)
				(80, 0.0301922520026)
				(100,  0.022366575626)
				(120, 0.0189172737835)
				(140, 0.0185702356315)
				(160, 0.0134109912651)
				(180, 0.0109117549589)
				(200,0.0108717565358)}; 
			\addplot[red, mark=square*, mark size=2] 
			coordinates{
				(20,0.03928229894)
				(40,  0.01578411299)
				(60, 0.01155714582)
				(80, 0.002478766476)
				(100, 0.001179446696)
				(120, 0.0003373235479)
				(140, 0.0001405245319)
				(160,  1.25E-04)
				(180, 6.86E-05)
				(200,5.84E-05) };
			\legend{LASSO,CSGM,IPGAN} 
			\end{axis} 		
			\end{tikzpicture}}& 
			\raisebox{-0.5\height}{
				\includegraphics[width=0.20\linewidth]{./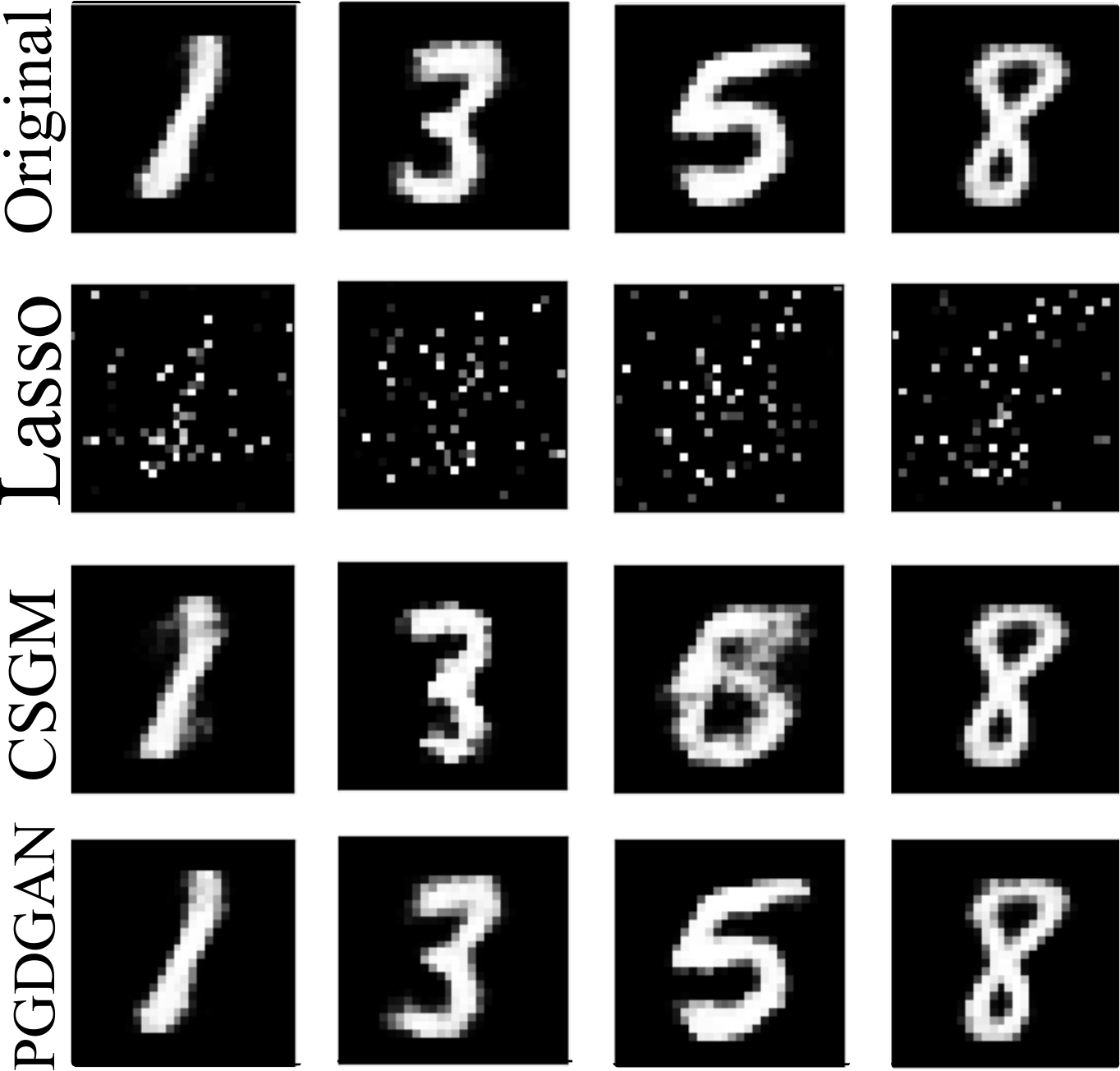}}&
					\begingroup
					\setlength{\tabcolsep}{1pt} 
					\renewcommand{\arraystretch}{1.2} 
					\begin{tabular}{cccccccc}      
						\begin{sideways}{\scriptsize ~~Original}\end{sideways}&
						\includegraphics[width=\s\linewidth]{./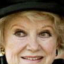}&
						\includegraphics[width=\s\linewidth]{./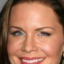}&
						\includegraphics[width=\s\linewidth]{./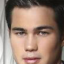}&
						\includegraphics[width=\s\linewidth]{./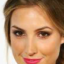}&
						\includegraphics[width=\s\linewidth]{./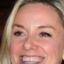}&
						\includegraphics[width=\s\linewidth]{./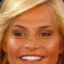}&
						\includegraphics[width=\s\linewidth]{./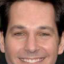}
						\\
						\begin{sideways}{\scriptsize ~~Lasso}\end{sideways}&
						\includegraphics[width=\s\linewidth]{./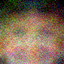}&
						\includegraphics[width=\s\linewidth]{./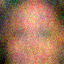}&
						\includegraphics[width=\s\linewidth]{./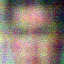}&
						\includegraphics[width=\s\linewidth]{./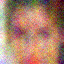}&
						\includegraphics[width=\s\linewidth]{./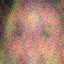}&
						\includegraphics[width=\s\linewidth]{./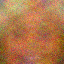}&
						\includegraphics[width=\s\linewidth]{./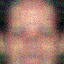}
						\\
						\begin{sideways}{\scriptsize ~~~CSGM}\end{sideways}&		
						\includegraphics[width=\s\linewidth]{./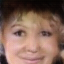}&
						\includegraphics[width=\s\linewidth]{./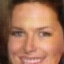}&
						\includegraphics[width=\s\linewidth]{./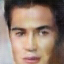}&
						\includegraphics[width=\s\linewidth]{./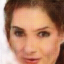}&
						\includegraphics[width=\s\linewidth]{./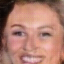}&
						\includegraphics[width=\s\linewidth]{./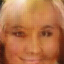}&
						\includegraphics[width=\s\linewidth]{./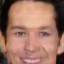}
						\\
						\begin{sideways}{\scriptsize PGD-GAN}\end{sideways}&
						\includegraphics[width=\s\linewidth]{./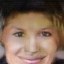}&
						\includegraphics[width=\s\linewidth]{./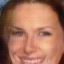}&
						\includegraphics[width=\s\linewidth]{./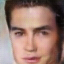}&
						\includegraphics[width=\s\linewidth]{./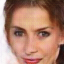}&
						\includegraphics[width=\s\linewidth]{./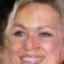}&
						\includegraphics[width=\s\linewidth]{./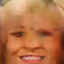}&
						\includegraphics[width=\s\linewidth]{./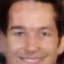}
					\end{tabular}
					\endgroup
					\\
				(a) & (b) & (c)	
		\end{tabular}
		\endgroup
	\end{center}
	\caption{\emph{(a) Comparison of our algorithm with CSGM \cite{bora2017compressed} and Lasso on MNIST; (b) Reconstruction results with $m=100$ measurements; (c) Reconstruction results on celebA dataset with $m=1000$ measurements.}}
	\label{fig:mnist}
\end{figure*}

From standard compressive sensing theory, we know that conditions such as the restricted isometry property (RIP) on $A$ are sufficient to guarantee robust signal recovery. It has also been demonstrated that the RIP is a sufficient condition for recovery using iterative projections on manifolds \cite{shah2011iterative}. These conditions ensure that the operator $A$ preserves the uniqueness of the signal, i.e., the measurements corresponding to two different signals in the model would also be sufficiently different. In our case, we need to ensure that the difference vector of any two signals in the set $\S$ lies away from the nullspace of the matrix $A$. This condition is encoded via the \SREC~(Set Restricted Eigenvalue Condition) defined in \cite{bora2017compressed}. We slightly modify this condition and present it in the form of squared $l2$-norm : 
\begin{definition}
	Let $\S \in \R^n$. $A$ is $m \times n$ matrix. For parameters $\gamma > 0,~\delta \geq 0$, matrix $A$ is said to satisfy the \SREC$(\S, \gamma, \delta)$ if,
	\[
	\|A(x_1-x_2)\|^2 \geq \gamma \|x_1-x_2\|^2 - \delta,
	\]
	for $\forall x_1,x_2 \in \S$.
\end{definition}
Further, based on \cite{shah2011iterative,foucart2013}, we propose the following theorem about the convergence of our algorithm:
\begin{theorem}
Let $G: \R^k \rightarrow \R^n$ be a differentiable generator function with range $\S$. Let $A$ be a random Gaussian matrix with $A_{i,j}\sim N(0,1/m)$ such that it satisfies the \SREC$(\S, \gamma, \delta)$ with probability $1-p$, and has $\|Av\| \leq \rho\|v\|$ for every $v \in \R^n$ with probability $1-q$ with $\rho^2 \leq \gamma$.  
Then, for every vector $x^* \in \S$, the sequence $\left(x_t\right)$ defined by the algorithm {PGD-GAN} [\ref{alg:PGD-GAN}] with $y = Ax^*$ converges to $x^*$ with probability at least $1-p-q$. 
\end{theorem}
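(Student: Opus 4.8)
The plan is to run the projected-gradient (projected Landweber) analysis of \cite{shah2011iterative,foucart2013}, with the \SREC\ playing the role that the restricted isometry property plays in sparse recovery. Take the step size $\eta=1$ (the natural choice, since $A_{i,j}\sim N(0,1/m)$ gives $\mathbb{E}[A^TA]=I_n$); then, using $y=Ax^*$, the gradient step can be written as $w_t-x^*=(I-A^TA)(x_t-x^*)$. \emph{Step 1 (the projection is a one-sided contraction towards $x^*$).} Because $x^*\in\S=\mathrm{range}(G)$ and $x_{t+1}=\mathcal{P}_G(w_t)$ is, by definition of $\mathcal{P}_G$, a point of $\S$ nearest to $w_t$, we have $\norm{x_{t+1}-w_t}\le\norm{x^*-w_t}$. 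Expanding $\norm{x_{t+1}-x^*}^2=\norm{(x_{t+1}-w_t)+(w_t-x^*)}^2$ and plugging in this inequality collapses the square to
\begin{equation}
\norm{x_{t+1}-x^*}^2 \;\le\; 2\,\bigl\langle\, x_{t+1}-x^*,\; (I-A^TA)(x_t-x^*)\,\bigr\rangle. \label{eq:prop-key}
\end{equation}

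\emph{Step 2 (convert the right-hand side into \SREC\ quantities).} Set $u=x_{t+1}-x^*$, $v=x_t-x^*$, $w=x_{t+1}-x_t=u-v$. For $t\ge1$ all of $x_{t+1},x_t,x^*$ lie in $\S$, so $u$ and $v$ are each differences of two elements of $\S$; the first iterate $x_0=\mathbf{0}$ may lie outside $\S$ and will be dispatched separately. Expanding \eqref{eq:prop-key} with the polarization identities $2\langle u,v\rangle=\norm{u}^2+\norm{v}^2-\norm{w}^2$ and $2\langle Au,Av\rangle=\norm{Au}^2+\norm{Av}^2-\norm{Aw}^2$ and cancelling $\norm{u}^2$ gives $\norm{Au}^2\le(\norm{v}^2-\norm{Av}^2)+(\norm{Aw}^2-\norm{w}^2)$. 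Applying \SREC$(\S,\gamma,\delta)$ to $u$ and to $v$ (legitimate, since both are differences of $\S$-points) and the spectral bound $\norm{Aw}\le\rho\norm{w}$ with $\rho^2\le\gamma$ to the last bracket, one obtains the per-iteration recursion
\begin{equation}
\norm{x_{t+1}-x^*}^2 \;\le\; \frac{1-\gamma}{\gamma}\,\norm{x_t-x^*}^2 \;+\; \frac{2\delta}{\gamma}. \label{eq:prop-rec}
\end{equation}

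\emph{Step 3 (iterate, handle $t=0$, union bound).} By a union bound the \SREC\ and the spectral bound hold simultaneously with probability at least $1-p-q$, and on that event \eqref{eq:prop-rec} is valid for every $t\ge1$; the single step from $x_0=\mathbf{0}$ to $x_1=\mathcal{P}_G(w_0)\in\S$ needs only the trivial bound $\norm{Av}^2\ge0$ in place of the \SREC\ and leaves $\norm{x_1-x^*}$ finite. Since the contraction factor $\alpha:=(1-\gamma)/\gamma$ is $<1$ in the operative regime, unrolling \eqref{eq:prop-rec} yields $\norm{x_t-x^*}^2\le\alpha^{\,t-1}\norm{x_1-x^*}^2+\tfrac{2\delta}{\gamma(1-\alpha)}$; hence $x_t\to x^*$ when $\delta=0$ (for a Gaussian $A$ the covering-net argument of \cite{bora2017compressed} makes $\delta$ as small as one wishes once $m\gtrsim k$), and in fact linearly, so $T=O(\log(1/\eps))$ iterations suffice for $\eps$-accuracy.

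\emph{Where the work is.} The crux is Steps 1--2. The subtle points: (i) \eqref{eq:prop-key} uses the \emph{exact} minimizer of the inner objective $\norm{w_t-G(z)}$, whereas \textsc{PGD-GAN} only approximates it by $T_{in}$ gradient steps on a non-convex loss --- this theory/practice gap should be flagged; (ii) one must check that every vector handed to the \SREC\ is genuinely a difference of two points of $\S$, which is exactly why $x_0=\mathbf{0}$ needs its own (easy) treatment; and (iii) forcing the contraction factor below $1$ really does require $\gamma$ bounded away from $\tfrac12$ together with $\rho^2\le\gamma$ --- this is the substantive content packed into ``$A$ satisfies \SREC$(\S,\gamma,\delta)$'', and it is precisely what the Gaussian assumption with sufficiently many measurements is there to supply.
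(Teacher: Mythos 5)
Your proof is correct in substance and reaches the stated conclusion, but it runs the contraction on a different quantity than the paper does, so the two arguments are worth contrasting. The paper tracks the measurement-domain objective $\psi(x_t)=\norm{y-Ax_t}^2$: starting from the same projection inequality $\norm{x_{t+1}-w_t}\le\norm{x^*-w_t}$ that you use, it derives $\psi(x_{t+1})\le(\tfrac{1}{\eta\gamma}-1)\psi(x_t)+\tfrac{\delta}{\eta\gamma}$ for any step size $\eta\in(\tfrac{1}{2\gamma},\tfrac{1}{\gamma})$, invoking the \SREC\ once (to bound $\norm{x^*-x_t}^2$ by $\tfrac{1}{\gamma}(\psi(x_t)+\delta)$) and the spectral bound once (to discard $\norm{A(x_{t+1}-x_t)}^2-\tfrac{1}{\eta}\norm{x_{t+1}-x_t}^2$). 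You instead track the signal-domain error $\norm{x_t-x^*}^2$ with $\eta=1$, apply the \SREC\ twice (to $x_{t+1}-x^*$ and to $x_t-x^*$), and obtain a contraction with factor $\tfrac{1-\gamma}{\gamma}$, which coincides with the paper's factor at $\eta=1$. What your route buys: it proves the theorem's actual claim ($x_t\to x^*$) directly, whereas the paper stops at the geometric decay of $\psi$ and would need one more invocation of the \SREC\ to translate that into convergence of the iterates; you also explicitly handle the fact that $x_0=\mathbf{0}$ need not lie in $\S$, and you flag the exact-versus-approximate projection gap --- both points the paper silently glosses over. What the paper's route buys: by leaving $\eta$ free it does not hard-wire $\gamma\in(\tfrac12,1)$; with your fixed $\eta=1$, dropping the term $\norm{Aw}^2-\norm{w}^2$ for $w=x_{t+1}-x_t$ requires $\rho^2\le 1$, which follows from $\rho^2\le\gamma$ only if you also assume $\gamma\le 1$ --- harmless for a Gaussian $A$ normalized so that $\mathbb{E}\norm{Av}^2=\norm{v}^2$, but it should be stated alongside your $\gamma>\tfrac12$ requirement. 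Finally, your unrolled bound leaves a residual floor of order $\delta/(\gamma(1-\alpha))$, so, exactly as in the paper, ``converges to $x^*$'' should be read as convergence to an $O(\delta)$-neighborhood unless $\delta$ is driven to zero by taking $m$ large.
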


\begin{proof}
	Define the squared error loss function $\psi(v) \coloneqq \|y - Av\|^2$. Then, we have:
	\begin{align*}
	& \psi(x_{t+1}) - \psi(x_t) \\
	& = \|Ax_{t+1}\|^2 - 2\langle y, Ax_{t+1} \rangle + 2\langle y, Ax_t \rangle - \|Ax_t\|^2, \\
	&= \|Ax_{t+1} - Ax_t\|^2 + 2\langle x_t-x_{t+1}, A^{T}A(x^*-x_t) \rangle.
	\end{align*}
	Substituting $y = Ax^*$ and rearranging yields,
	\begin{align}
	2\langle x_t - x_{t+1}, A^T(y-Ax_t) \rangle & = \psi(x_{t+1}) - \psi(x_t) \nonumber \\
	& - \|Ax_{t+1} - Ax_t\|^2.
	\label{eq:prf1}
	\end{align}
	Define:
	\begin{align*}
		w_t \coloneqq x_t + \eta A^T(y-Ax_t) = x_t + \eta A^TA(x^*-x_t)
	\end{align*}
	Then, by definition of the projection operator $P_G$, the vector $x_{t+1}$ is a better (or equally good) approximation to $w$
	as the true image $x^*$. Therefore, we have:
	\begin{align*}
	\|x_{t+1} - w_t\|^2 \leq \|x^* - w_t\|^2.
	\end{align*}
	Substituting for $w_t$ and expanding both sides, we get:
	\begin{align*}
	& \|x_{t+1} - x_t\|^2 - 2\eta \langle x_{t+1}-x_t, A^T(y-Ax_t) \rangle \\
	& \leq \|x^* - x_t\|^2 - 2\eta \langle x^*-x_t, A^T(y-Ax_t) \rangle.
	\end{align*}
	Substituting $y = Ax^*$ and rearranging yields,
	\begin{align}
	& 2\langle x_t - x_{t+1}, A^T(y-Ax_t) \rangle \nonumber \\
	& \leq \frac{1}{\eta}\|x^* - x_t\|^2 - \frac{1}{\eta}\|x_{t+1} - x_t\|^2 - 2\psi(x_t).
	\label{eq:prf2}
	\end{align}
	We now use \ref{eq:prf1} and \ref{eq:prf2} to obtain,
	\begin{align}
	& \psi(x_{t+1}) \leq \frac{1}{\eta}\|x^* - x_t\|^2 - \psi(x_t) \nonumber \\
	& - \left( \frac{1}{\eta}\|x_{t+1} - x_t\|^2 - \|Ax_{t+1} - Ax_t\|^2 \right).
	\label{eq:prf3}
	\end{align}
	Now, from the \SREC, we know that, 
	$$\|A (x_1 - x_2)\|^2 \geq \gamma \|x_1 - x_2 \|^2 - \delta.$$ 
	As $x^*, x_t$ and $x_{t+1}$ are `natural' vectors,
	\begin{align}
	\frac{1}{\eta} \|x^* -x_t\|^2 \leq \frac{1}{\eta \gamma}\|y-Ax_t\|^2 + \frac{\delta}{\eta \gamma}.
	\label{eq:prf4}
	\end{align}
	Substituting \ref{eq:prf4} in \ref{eq:prf3},
	\begin{align*}
	& \psi(x_{t+1}) \leq \left(\frac{1}{\eta \gamma} -1\right) \psi(x_t)\\
	&~~~~~~~-\left( \frac{1}{\eta}\|x_{t+1} - x_t\|^2 - \|Ax_{t+1} - Ax_t\|^2 \right) + \frac{\delta}{\eta \gamma}.
	\end{align*}
	
	From our assumption that $\|Av\| \leq \rho \|v\|, \forall v \in \R^n$ with probability $1-q$, we write:
	$$\|Ax_{t+1} - Ax_t\|^2 \leq \rho^2\|x_{t+1} - x_t\|^2,$$
	$$ \|Ax_{t+1} - Ax_t\|^2 - \frac{1}{\eta}\|x_{t+1} - x_t\|^2 \leq \left(\rho^2 - \frac{1}{\eta}\right)\|x_{t+1} - x_t\|^2.$$
	Let us choose learning rate$(\eta)$ such that $\frac{1}{2\gamma}<\eta < \frac{1}{\gamma}$. We also have $\rho^2 \leq \gamma$. Combining both, we get $\rho^2 < \frac{1}{\eta}$, which makes the L.H.S. in the above equation negative. Therefore,
	$$\psi(x_{t+1}) \leq \left(\frac{1}{\eta \gamma} -1\right) \psi(x_t) + \frac{\delta}{\eta \gamma},$$
	where $\delta$ is inversely proportional to the number of measurements $m$ \cite{bora2017compressed}. Provided sufficient number of measurements, $\delta$ is small enough and can be ignored. Also, $\frac{1}{2\gamma}<\eta < \frac{1}{\gamma}$ yields,
	$$0<\left(\frac{1}{\eta \gamma} -1\right) < 1.$$
	Hence,
	\begin{align}
	\psi(x_{t+1}) \leq \alpha \psi(x_t); ~0< \alpha < 1,
	\label{eq:prf5}
	\end{align}
	with probability at least $1-p-q$.
\end{proof}

	\section{Models and Experiments}
In this section, we describe our experimental setup and report the performance comparisons of our algorithm with that of \cite{bora2017compressed} as well as the LASSO. We use two different GAN architectures and two different datasets in our experiments to show that our approach can work with variety of GAN architectures and datasets. 

In our experiments, we choose the entries of the matrix $A$ independently from a Gaussian distribution with zero mean and $1/m$ standard deviation. We ignore the presence of noise; however, our experiments can be replicated with additive Gaussian noise. We use a gradient descent optimizer keeping the total number of update steps $(T \times T_{in})$ fixed for both algorithms and doesn't allow random restarts.

In the first experiment, we use a very simple GAN model trained on the MNIST dataset, which is collection of $60,000$ handwritten digit images, each of size $28 \times 28$ \cite{lecun1998gradient}. In our GAN, both the generator and the discriminator are fully-connected neural networks with only one hidden layer. The generator consists of $20$ input neurons, $200$ hidden-layer neurons and $784$ output neurons, while the discriminator consists of $784$ input neurons, $128$ hidden layer neurons and $1$ output neuron. The size of the latent space is set to $k = 20$, i.e., the input to our generator is a standard normal vector $z \in R^{20}$.  We train the GAN using the method described in \cite{goodfellow2014generative}. We use the Adam optimizer \cite{kingma2014adam} with learning rate $0.001$ and mini-batch size $128$ for the training. 
 
We test the MNIST GAN with $10$ images taken from the span of generator to get rid of the representation error, and provide both quantitative and qualitative results. For PGD-GAN, because of the zero initialization, a high learning rate is required to get a meaningful output before passing it to the projection step. Therefore, we choose $\eta \geq 0.5$. The parameter $\eta_{in}$ is set to $0.01$ with $T=15$ and $T_{in}=200$. Thus, the total number of update steps is fixed to $3000$. Similarly, the algorithm of \cite{bora2017compressed} is tested with $3000$ updates and $\eta = 0.01$. For comparison, we use the reconstruction error $= \|\widehat{x}-x^*\|^2$. In Fig. \ref{fig:mnist}(a), we show the reconstruction error comparisons for increasing values of number of measurements. We observe that our algorithm performs better than the other two methods. Also, as the input images are chosen from the span of the generator itself, it is possible to get close to zero error with only $100$ measurements. Fig. \ref{fig:mnist}(b) depicts reconstruction results for selected MNIST images.
 
The second set of our experiments are performed on a Deep Convolutional GAN (DCGAN) trained on the celebA dataset, which contains more than $200,000$ face images of celebrities \cite{liu2015deep}. We use a pre-trained DCGAN model, which was made available by \cite{bora2017compressed}. Thus, the details of the model and training are the same as described in \cite{bora2017compressed}. The dimension of latent space for DCGAN is $k=100$. We report the results on a held out test dataset, unseen by the GAN at the time of training. Total number of updates is set to $1000$, with $T = 10$ and $T_{in}=100$. Learning rates for PGD-GAN are set as $\eta = 0.5$ and $\eta_{in}=0.1$. The algorithm of \cite{bora2017compressed} is run with $\eta = 0.1$ and $1000$ update steps. Image reconstruction results from $m=1000$ measurements with our algorithm are displayed in Fig. \ref{fig:mnist}(c). We observe that our algorithm produces better reconstructions compared to the other baselines.
	{{
	\footnotesize
	\bibliographystyle{IEEEbib}
	\bibliography{./common/chinbiblio,./common/csbib,./common/mrsbiblio,./common/vsbib,./common/kernels}

\begin{thebibliography}{10}

\bibitem{candes2006compressive}
E.~Cand{\`e}s et~al.,
\newblock ``Compressive sampling,''
\newblock in {\em Proc. of the intl. congress of math.} Madrid, Spain, 2006,
  vol.~3, pp. 1433--1452.

\bibitem{candes2006stable}
E.~Candes, J.~Romberg, and T.~Tao,
\newblock ``Stable signal recovery from incomplete and inaccurate
  measurements,''
\newblock {\em Comm. on pure and appl. math.}, vol. 59, no. 8, pp. 1207--1223,
  2006.

\bibitem{donoho2006compressed}
D.~Donoho,
\newblock ``Compressed sensing,''
\newblock {\em IEEE Trans. Inform. Theory}, vol. 52, no. 4, pp. 1289--1306,
  2006.

\bibitem{modelcs}
R.~Baraniuk, V.~Cevher, M.~Duarte, and C.~Hegde,
\newblock ``Model-based compressive sensing,''
\newblock {\em IEEE Trans. Inform. Theory}, vol. 56, no. 4, pp. 1982--2001,
  Apr. 2010.

\bibitem{surveyEATCS}
C.~Hegde, P.~Indyk, and L.~Schmidt,
\newblock ``Fast algorithms for structured sparsity,''
\newblock {\em Bulletin of the EATCS}, vol. 1, no. 117, pp. 197--228, Oct.
  2015.

\bibitem{elad2006image}
M.~Elad and M.~Aharon,
\newblock ``Image denoising via sparse and redundant representations over
  learned dictionaries,''
\newblock {\em IEEE Trans. Image Processing}, vol. 15, no. 12, pp. 3736--3745,
  2006.

\bibitem{aharon2006rm}
M.~Aharon, M.~Elad, and A.~Bruckstein,
\newblock ``$ rm k $-svd: An algorithm for designing overcomplete dictionaries
  for sparse representation,''
\newblock {\em IEEE Trans. Signal Processing}, vol. 54, no. 11, pp. 4311--4322,
  2006.

\bibitem{chan2006total}
T.~Chan, J.~Shen, and H.~Zhou,
\newblock ``Total variation wavelet inpainting,''
\newblock {\em Jour. of Math. imaging and Vision}, vol. 25, no. 1, pp.
  107--125, 2006.

\bibitem{goodfellow2014generative}
I.~Goodfellow, J.~Pouget-Abadie, M.~Mirza, B.~Xu, D.~Warde-Farley, S.~Ozair,
  A.~Courville, and Y.~Bengio,
\newblock ``Generative adversarial nets,''
\newblock in {\em Proc. Adv. in Neural Processing Systems (NIPS)}, 2014, pp.
  2672--2680.

\bibitem{zhu2016generative}
J.~Zhu, P.~Kr{\"a}henb{\"u}hl, E.~Shechtman, and A.~Efros,
\newblock ``Generative visual manipulation on the natural image manifold,''
\newblock in {\em Proc. European Conf. Comp. Vision (ECCV)}, 2016.

\bibitem{brock2016neural}
A.~Brock, T.~Lim, J.~Ritchie, and N.~Weston,
\newblock ``Neural photo editing with introspective adversarial networks,''
\newblock {\em arXiv preprint arXiv:1609.07093}, 2016.

\bibitem{chen2016infogan}
X.~Chen, Y.~Duan, R.~Houthooft, J.~Schulman, I.~Sutskever, and P.~Abbeel,
\newblock ``Infogan: Interpretable representation learning by information
  maximizing generative adversarial nets,''
\newblock in {\em Proc. Adv. in Neural Processing Systems (NIPS)}, 2016, pp.
  2172--2180.

\bibitem{zhao2016energy}
J.~Zhao, M.~Mathieu, and Y.~LeCun,
\newblock ``Energy-based generative adversarial network,''
\newblock {\em arXiv preprint arXiv:1609.03126}, 2016.

\bibitem{berthelot2017began}
D.~Berthelot, T.~Schumm, and L.~Metz,
\newblock ``Began: Boundary equilibrium generative adversarial networks,''
\newblock {\em arXiv preprint arXiv:1703.10717}, 2017.

\bibitem{yeh2016semantic}
R.~Yeh, C.~Chen, T.~Lim, M.~Hasegawa-Johnson, and M.~Do,
\newblock ``Semantic image inpainting with perceptual and contextual losses,''
\newblock {\em arXiv preprint arXiv:1607.07539}, 2016.

\bibitem{ledig2016photo}
C.~Ledig, L.~Theis, F.~Husz{\'a}r, J.~Caballero, A.~Cunningham, A.~Acosta,
  A.~Aitken, A.~Tejani, J.~Totz, Z.~Wang, et~al.,
\newblock ``Photo-realistic single image super-resolution using a generative
  adversarial network,''
\newblock {\em Proc. IEEE Conf. Comp. Vision and Pattern Recog. (CVPR)}, pp.
  105--114, 2017.

\bibitem{bora2017compressed}
A.~Bora, A.~Jalal, E.~Price, and A.~Dimakis,
\newblock ``Compressed sensing using generative models,''
\newblock {\em Proc. Int. Conf. Machine Learning}, 2017.

\bibitem{lecun1998gradient}
Y.~LeCun, L.~on Bottou, Y.~Bengio, and P.~Haffner,
\newblock ``Gradient-based learning applied to document recognition,''
\newblock {\em Proc. of the IEEE}, vol. 86, no. 11, pp. 2278--2324, 1998.

\bibitem{radford2015unsupervised}
A.~Radford, L.~Metz, and S.~Chintala,
\newblock ``Unsupervised representation learning with deep convolutional
  generative adversarial networks,''
\newblock {\em Proc. Int. Conf. Learning Representations (ICLR)}, 2016.

\bibitem{taehoon2017}
K.~Taehoon,
\newblock ``A tensorflow implementation of “deep convolutional generative
  adversarial networks”,'' 2017.

\bibitem{liu2015deep}
Z.~Liu, P.~Luo, X.~Wang, and X.~Tang,
\newblock ``Deep learning face attributes in the wild,''
\newblock in {\em Proc. of the IEEE Intl. Conf. on Comp. Vision}, 2015, pp.
  3730--3738.

\bibitem{donoho1995noising}
D.~Donoho,
\newblock ``De-noising by soft-thresholding,''
\newblock {\em IEEE Trans. Inform. Theory}, vol. 41, no. 3, pp. 613--627, 1995.

\bibitem{xu2010image}
Z.~Xu and J.~Sun,
\newblock ``Image inpainting by patch propagation using patch sparsity,''
\newblock {\em IEEE Trans. Image Processing}, vol. 19, no. 5, pp. 1153--1165,
  2010.

\bibitem{dong2011image}
W.~Dong, L.~Zhang, G.~Shi, and X.~Wu,
\newblock ``Image deblurring and super-resolution by adaptive sparse domain
  selection and adaptive regularization,''
\newblock {\em IEEE Trans. Image Processing}, vol. 20, no. 7, pp. 1838--1857,
  2011.

\bibitem{rudin1992nonlinear}
L.~Rudin, S.~Osher, and E.~Fatemi,
\newblock ``Nonlinear total variation based noise removal algorithms,''
\newblock {\em Physica D: Nonlinear Phenomena}, vol. 60, no. 1-4, pp. 259--268,
  1992.

\bibitem{chambolle2004algorithm}
A.~Chambolle,
\newblock ``An algorithm for total variation minimization and applications,''
\newblock {\em Jour. of Math. imaging and vision}, vol. 20, no. 1, pp. 89--97,
  2004.

\bibitem{lecun2015deep}
Y.~LeCun, Y.~Bengio, and G.~Hinton,
\newblock ``Deep learning,''
\newblock {\em Nature}, vol. 521, no. 7553, pp. 436--444, 2015.

\bibitem{kulkarni2016reconnet}
K.~Kulkarni, S.~Lohit, P.~Turaga, R.~Kerviche, and A.~Ashok,
\newblock ``Reconnet: Non-iterative reconstruction of images from compressively
  sensed measurements,''
\newblock in {\em Proc. IEEE Conf. Comp. Vision and Pattern Recog. (CVPR)},
  2016, pp. 449--458.

\bibitem{mousavi2015deep}
A.~Mousavi, A.~Patel, and R.~Baraniuk,
\newblock ``A deep learning approach to structured signal recovery,''
\newblock in {\em Proc. Allerton Conf. Communication, Control, and Computing},
  2015, pp. 1336--1343.

\bibitem{mousavi2017learning}
A.~Mousavi and R.~Baraniuk,
\newblock ``Learning to invert: Signal recovery via deep convolutional
  networks,''
\newblock {\em Proc. IEEE Int. Conf. Acoust., Speech, and Signal Processing
  (ICASSP)}, 2017.

\bibitem{xu2014deep}
L.~Xu, J.~Ren, C.~Liu, and J.~Jia,
\newblock ``Deep convolutional neural network for image deconvolution,''
\newblock in {\em Proc. Adv. in Neural Processing Systems (NIPS)}, 2014, pp.
  1790--1798.

\bibitem{dong2016image}
C.~Dong, C.~Loy, K.~He, and X.~Tang,
\newblock ``Image super-resolution using deep convolutional networks,''
\newblock {\em IEEE Trans. Pattern Anal. Machine Intell.}, vol. 38, no. 2, pp.
  295--307, 2016.

\bibitem{kim2016accurate}
J.~Kim, J.~Kwon~Lee, and K.~Mu~Lee,
\newblock ``Accurate image super-resolution using very deep convolutional
  networks,''
\newblock in {\em Proc. IEEE Conf. Comp. Vision and Pattern Recog. (CVPR)},
  2016, pp. 1646--1654.

\bibitem{rick2017one}
J.~Rick~Chang, C.~Li, B.~Poczos, B.~Vijaya~Kumar, and A.~Sankaranarayanan,
\newblock ``One network to solve them all--solving linear inverse problems
  using deep projection models,''
\newblock in {\em Proc. IEEE Conf. Comp. Vision and Pattern Recog. (CVPR)},
  2017, pp. 5888--5897.

\bibitem{kelly2017deep}
B.~Kelly, T.~Matthews, and M.~Anastasio,
\newblock ``Deep learning-guided image reconstruction from incomplete data,''
\newblock {\em arXiv preprint arXiv:1709.00584}, 2017.

\bibitem{arjovsky2017wasserstein}
M.~Arjovsky, S.~Chintala, and L.~Bottou,
\newblock ``Wasserstein gan,''
\newblock {\em arXiv preprint arXiv:1701.07875}, 2017.

\bibitem{shah2011iterative}
P.~Shah and V.~Chandrasekaran,
\newblock ``Iterative projections for signal identification on manifolds:
  Global recovery guarantees,''
\newblock in {\em Proc. Allerton Conf. Communication, Control, and Computing},
  2011, pp. 760--767.

\bibitem{foucart2013}
S.~Foucart and H.~Rauhut,
\newblock {\em A mathematical introduction to compressive sensing}, vol.~1,
\newblock Springer.

\bibitem{kingma2014adam}
D.~Kingma and J.~Ba,
\newblock ``Adam: A method for stochastic optimization,''
\newblock {\em arXiv preprint arXiv:1412.6980}, 2014.

\end{thebibliography}
	}
	}

\end{document}